\newtheorem*{prop}{Proposition}
\newtheorem*{define}{Problem Definition}
\begin{document}
%
\title{Time Perception Machine: Temporal Point Processes for the When, Where and What of Activity Prediction}
%
%
%
%

\author{Yatao~Zhong,
        Bicheng~Xu,
        Guang-Tong Zhou,
        Luke~Bornn,
        Greg~Mori
\IEEEcompsocitemizethanks{\IEEEcompsocthanksitem Y. Zhong, B. Xu and G. Zhou were with School of Computing Science, Simon Fraser University, Burnaby, BC, Canada.\protect\\
E-mail: yataozhong@gmail.com, bichengxu@gmail.com, zhouguangtong@gmail.com
\IEEEcompsocthanksitem L. Bornn and G. Mori are with School of Computing Science, Simon Fraser University, Burnaby, BC, Canada.\protect\\
E-mail: lbornn@sfu.ca, mori@cs.sfu.ca}
}

\IEEEtitleabstractindextext{%
\begin{abstract}
Numerous powerful point process models have been developed to understand temporal patterns in sequential data from fields such as health-care, electronic commerce, social networks, and natural disaster forecasting. In this paper, we develop novel models for learning the temporal distribution of human activities in streaming data (e.g., videos and person trajectories). We propose an integrated framework of neural networks and temporal point processes for predicting when the next activity will happen.  Because point processes are limited to taking event frames as input, we propose a simple yet effective mechanism to extract features at frames of interest while also preserving the rich information in the remaining frames. We evaluate our model on two challenging  datasets. The results show that our model outperforms traditional statistical point process approaches significantly, demonstrating its effectiveness in capturing the underlying temporal dynamics as well as the correlation within sequential activities. Furthermore, we also extend our model to a joint estimation framework for predicting the timing, spatial location, and category of the activity simultaneously, to answer the \textit{when}, \textit{where}, and \textit{what} of activity prediction.
\end{abstract}

\begin{IEEEkeywords}
Temporal Point Process, Activity Prediction, Time Perception Machine
\end{IEEEkeywords}}

\maketitle

\IEEEdisplaynontitleabstractindextext

%
\IEEEpeerreviewmaketitle

\IEEEraisesectionheading{\section{Introduction}\label{sec:introduction}}
During the past decades, researchers have made substantial progress in computer vision algorithms that can automatically detect~\cite{shou2016action, shou2017cdc, yuan2017temporal} and recognize ~\cite{karpathy2014large, yue2015beyond, feichtenhofer2016convolutional, qiu2017learning} actions in video sequences. However, the ability to go beyond this and estimate how past actions will affect future activities opens exciting possibilities. A good estimation of future behaviour is an essential sensory component for an automated system to fully comprehend the real world. In this paper, we tackle the problem of estimating the prospective occurrence of future activity. Our goal is to predict the timing, spatial location, and category of the next activity given past information.  We aim to answer the \textit{when}, \textit{where}, and \textit{what} questions of activity prediction. 

Consider the sports video example shown in Fig.~\ref{fig:pull}.  In our work, we directly model the occurrence of discrete activity events that occur in a data stream.  Within a sports context, these activities could include key moments in a game, such as passes, shots, or goals.  More generally, they could correspond to important human actions along a sequence: such as a person leaving a building, stopping to engage in conversation with a friend, or sitting down on a park bench.  Predicting where and when these semantically meaningful events occur would enable many applications within robotics, autonomous vehicles, security and surveillance, and other video processing domains.

\begin{figure*}[t]
  \centering
  \includegraphics[width=0.95\textwidth]{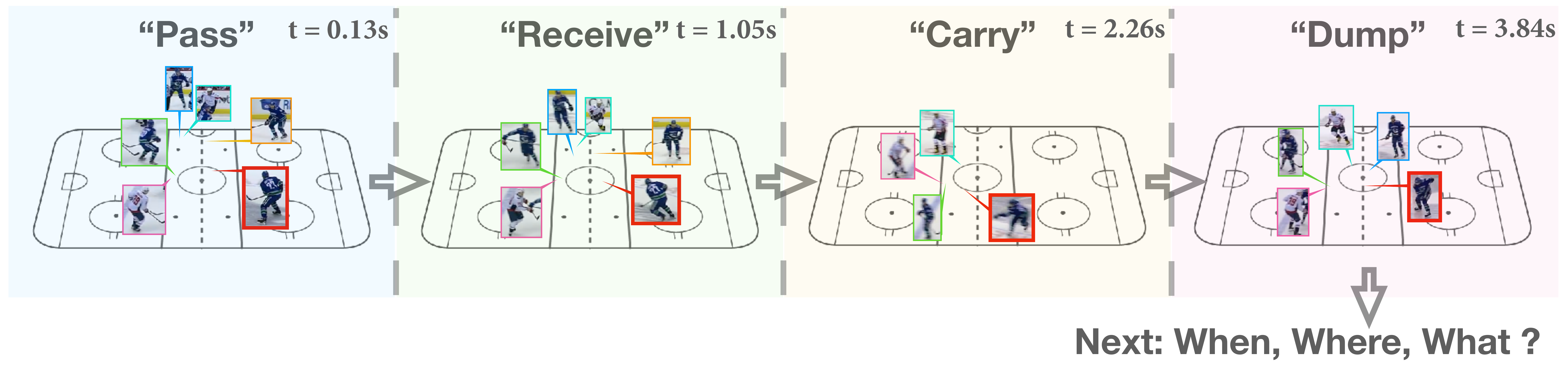}
  \caption{An ice hockey example: 1) the puck is passed to the player in the red box; 2) the player in the red box receives the puck; 3) the player in the red box carries the puck across the centre line; 4) the player in the red box dumps the puck into the offensive zone. Given the sequence of activities above, we aim to predict what the next activity will be, where it will take place, and when it will occur.}
  \label{fig:pull}
\end{figure*}

\begin{define} 
Let the input be a sequence of $n$ frames. Among these, $j$ ($j \ll n$) frames are each marked by an activity, whose timestamps are denoted as $\{t_1, t_2, \cdots, t_j \}$.  Our goal is to estimate when and where the next activity ($j+1$) will happen and what type of activity it will be given the past sequence of activities and frames up to $t_j$.
\end{define}

Importantly, we are interested in predictions regarding the semantically meaningful, sparsely occurring events within a sequence.    This discrete time moment representation for actions is commonplace in numerous applications: e.g., where and when will the next shot take place in this hockey game, where do we need to be to intercept it; from where and when will the next person hail a rideshare, where should we drive to pick him/her up; when is the next nursing home patient going to request assistance, what will he/she request and where will that request be made?  Generalizations of this paradigm are possible, where we consider multiple people, such as players in a sports game.  We elaborate on this idea and demonstrate that we can model events corresponding to important, actionable inferences.

Following the standard terminology \cite{lian2015multitask}, we use the term \textit{\textbf{arrival pattern}} to refer to the temporal distribution of activities throughout the paper.  We wish to model this distribution and infer when and where the next activity will take place.  However, in vision tasks the raw input has $n$ frames, whereas we are interested in the $j$ moments sparsely distributed in the sequence that are the points at which activities commence.  Therefore, we need a mechanism to build features from the $j$ frames while also preserving information of other regular frames. To address this problem, we utilize a hierarchical recurrent neural network with skip connections for multi-resolution temporal data processing.

Similar to variational autoencoders \cite{kingma2013auto,gregor2015draw}, which model the distribution of latent variables with deep learning, our model leverages the same advantage of neural networks to fit the arrival pattern (temporal distribution of activities) in the data. A network is used to learn the conditional intensity of a temporal point process and the likelihood is maximized during training. In contrast to traditional statistical approaches that demand expert domain knowledge, our model does not require a hand-crafted conditional intensity. Instead, it is automatically learned on top of raw data. We name our model the Time Perception Machine (TPM).

Our work has three main contributions: 
\begin{enumerate}[(i)]
\item Proposing a new task -- predicting the occurrence of future activity -- for human action analysis, which has not been explored before on streaming data such as videos and person trajectories; 
\item Developing a novel hierarchical RNN with skip connections for feature extraction at finer resolution (frames of interest) while preserving information at coarser resolution; 
\item Formulating a generic conditional intensity and extending the model to a joint prediction framework for the when, where and what of activity forecasting. 
\end{enumerate}

\section{Related Work} 
\label{sec:relatework}

\subsection{Activity Forecasting}

Seminal work on activity forecasting was done by Kitani et al.~\cite{kitani2012activity}, who modeled the effect of physical surroundings using semantic scene labeling and inverse reinforcement learning to predict plausible future paths and destinations of pedestrians.   

Subsequent work~\cite{rhinehartfirst17} reasons about the long-term behaviors and goals of an individual given his first-person visual observations. Similarly, Xie et al.~\cite{Xie2016ModelingAI} attempted to infer human intents by leveraging the agent-based Lagaragian mechanics to model the latent needs that drive people toward functional objects.
Park et al.~\cite{soo2016egocentric} proposed an EgoRetinal map for motion planning from egocentric stereo videos.
Vondrick et al.~\cite{Vondrick2016AnticipatingTF} presented a framework for predicting the visual representations of future frames, which is employed to anticipate actions and objects in the future.  Unlike the previous work on activity forecasting, which focuses on planning paths and predicting intent, our work addresses a different problem in that we aim to predict the discrete attributes (the when, where, and what) of future activities.

Recent temporal activity detection / prediction methods build on recurrent neural network architectures.  These include connectionist temporal classification (CTC) architectures~\cite{huang2016connectionist,graves2006connectionist}.  CTC models conduct {\it classification} by generalizing away from actual time stamps, while prediction methods regress actual temporal values. A variety of temporal neural network structures exist (convolutional~\cite{2018arXiv180301271B}, GRU, LSTM, Phased LSTM~\cite{neil2016phased}), many of which have been applied to activity recognition.  Our contribution is complementary in that it focuses on a novel point process model for distributions of discrete events for activity prediction.

\subsection{Temporal Point Processes}
A temporal point process is a stochastic model used to capture the arrival pattern of a series of events in time. Temporal point processes are studied in various areas including health-care analysis \cite{lasko2014efficient}, electronic commerce \cite{xu2014path}, modeling earthquakes and aftershocks \cite{ogata1998space}, etc. 

A temporal point process model can be fully characterized by the ``conditional intensity'' quantity, denoted by $\lambda(t|H)$, which is conditioned on the past information $H$. The conditional intensity encodes the expected rate of arrivals within an infinitesimal neighborhood at time $t$. Once we determine the intensity, we determine a temporal point process. Mathematically, given the history $H_j$ up to the event $j$ and the conditional intensity $\lambda(t|H_j)$, we can formulate the probability density function $f(t|H_j)$ and the cumulative distribution function $F(t|H_j)$ for the time of the next event $(j+1)$, shown in Eq.~\ref{eq:density} and Eq.~\ref{eq:prob}. We defer the full derivation of both formulas to Appendix \ref{appdx:pdf_cdf}.
\begin{eqnarray}
    f(t|H_j) &=& \lambda(t|H_j) \cdot e^{-\int_{t_j}^{t} \lambda(u|H_j) du} \label{eq:density} \\
    F(t|H_j) &=& 1-e^{-\int_{t_j}^{t} \lambda(u|H_j) du} \label{eq:prob}
\end{eqnarray}

For notational convenience, we use ``$^*$'' to indicate that a quantity is conditioned on the past throughout this paper. For example, $\lambda^*(t) \triangleq \lambda(t|H_j)$, $f^*(t) \triangleq f(t|H_j)$ and $F^*(t) \triangleq F(t|H_j)$. Below we show the conditional intensities of several temporal point process models.


\textbf{Poisson Process} \cite{kingman1993poisson}. $\lambda^*(t)=\lambda$, where $\lambda$ is a positive constant.

\textbf{Hawkes Process} \cite{hawkes1971spectra}. $\lambda^*(t)=\lambda+\alpha \sum_{t_i<t} e^{-\gamma(t-t_i)}$, where $\lambda$, $\alpha$ and $\gamma$ are positive constants. This process is an ``aggregated'' process, where one event is likely to trigger a series of other events in a short period of time, but the likelihood drops exponentially with regard to time.  

\textbf{Self-Correcting Process} \cite{isham1979self}. $\lambda^*(t)=e^{\mu t-\sum_{t_i<t} \alpha}$, where $\mu$ and $\alpha$ are positive constants. This process is more ``averaged'' in time. A previous event is likely to inhibit the occurrence of the next one (by decreasing the intensity). Then the intensity will increase again until the next event happens. 

Furthermore, a recent work by Du et al.~\cite{du2016recurrent} explored temporal process models using neural networks, but only experimented with sparse timestamp data. We extend their approach to dense streaming data with the proposed hierarchical RNN to extract features at frames of interest. Additionally, we demonstrate the effectiveness of a more generic intensity function in modeling the arrival pattern. We also show how a more powerful joint estimation framework can be formulated for simultaneous prediction of the timing, spatial location and category of the next activity event.

\section{Model}
We will first introduce the hierarchical RNN structure upon which our model is built. Then we will present in detail the formulation and derivation of the proposed model for predicting the timing of future activities. Finally we show how our model can be extended to a joint estimation framework for the simultaneous prediction of the time, location, and category of the next activity. 

\subsection{Hierarchical RNN}
The input to our model is an entire sequence of $n$ frames. In our experiments, these include visual data in the form of bounding boxes cropped around people in video sequences and/or representations of human motion trajectories as 2D coordinates of person location over time. 

A typical temporal point process model only takes as input the $j$ frames annotated with activities. These are very sparse compared to the entire dense sequence of $n$ frames ($j \ll n$). We expect these $j$ significant frames will contain important features. However, we do not want to lose any information inherent in the remaining ($n-j$) frames. To this end, we need a hierarchical RNN capable of feature extraction at different time resolutions. This is similar in vein to tasks from the natural language processing domain, such as recent work \cite{chung2016hierarchical, sordoni2015hierarchical, kong2015segmental} in language modeling, with character-to-word and word-to-phase networks for feature extraction at multiple scales.  More generally, this is an instance of the classic
multiple-time scales problem in recurrent neural networks~\cite{el1996hierarchical}.

In our case, we use a hierarchical RNN model composed of two stacked RNNs. The lower-level RNN looks into the details by covering every frame in the input sequence. The higher level RNN fixes its attention only on frames of activities so as to capture the temporal dynamics among these significant times. We implement the RNN with LSTM cells. Fig.~\ref{fig:hrnn} shows the model structure.

\begin{figure*}[t]
\centering
\includegraphics[width=0.92\linewidth]{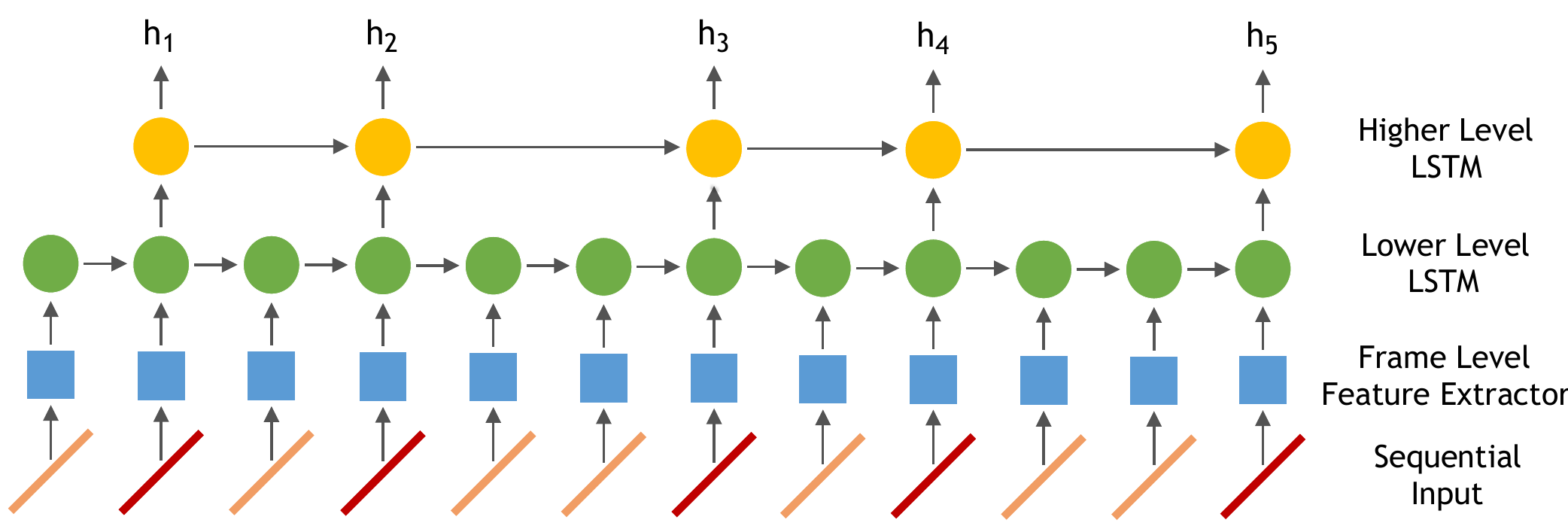} 
\caption{The hierarchical RNN structure. The frame level feature extractor can be any network applied to frames (e.g., VGG-16 net \cite{simonyan2014very}). The dense sequence of $n$ frames is fed into the lower level LSTM while only the significant $j$ frames pass their features to the higher level LSTM for further processing.}
\label{fig:hrnn}
\end{figure*}

\subsection{Conditional Intensity Function}
Instead of hand-crafting the conditional intensity $\lambda^*(t)$, we view it as the output of the hierarchical RNN and learn the conditional intensity directly from raw data. However, an arbitrary choice of the conditional intensity $\lambda^*(t)$ could be potentially problematic, because it needs to characterize a probability distribution. Thus, we need to validate the resultant probability density function in Eq.~\ref{eq:density} and the cumulative distribution function in Eq.~\ref{eq:prob}.
\begin{prop}
\begin{inparaenum}[]
$\lambda^*(t)$ is a valid conditional intensity that defines a temporal point process if and only if it satisfies \item $\int_{t_j}^{\infty} \lambda^*(t) dt=\infty$ and $\lambda^*(t) >0, \forall t$. 

\end{inparaenum}
\end{prop}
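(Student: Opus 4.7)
The plan is to recognize that $\lambda^*(t)$ defines a valid temporal point process exactly when the $f^*(t)$ produced by Eq.~\ref{eq:density} is a bona fide probability density function on $[t_j,\infty)$ (equivalently, when $F^*(t)$ from Eq.~\ref{eq:prob} is a valid CDF on that interval). Since $F^*(t_j)=0$ automatically, the content of the proposition reduces to the two defining properties of a density: pointwise non-negativity and unit total mass. I would prove the two implications separately, reusing a single change-of-variables computation in both directions.

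For the ``if'' direction, assume $\lambda^*(t)>0$ and $\int_{t_j}^{\infty}\lambda^*(t)\,dt=\infty$. Non-negativity of $f^*$ is immediate, since it is the product of $\lambda^*(t)>0$ and a strictly positive exponential. For unit mass, I would apply the substitution $u=\int_{t_j}^{t}\lambda^*(s)\,ds$, so that $du=\lambda^*(t)\,dt$; the divergence hypothesis guarantees $u$ sweeps the full half-line $[0,\infty)$ as $t$ ranges over $[t_j,\infty)$, and the integral of $f^*$ collapses to $\int_{0}^{\infty}e^{-u}\,du=1$.

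For the ``only if'' direction, assume $f^*$ is a valid density. Because the exponential factor in Eq.~\ref{eq:density} is strictly positive, the sign of $f^*(t)$ matches that of $\lambda^*(t)$, so pointwise non-negativity of $f^*$ forces $\lambda^*(t)\ge 0$; I would then argue that the strict inequality $\lambda^*(t)>0$ is the natural regularity convention, since $\lambda^*$ vanishing on a set of positive measure would correspond to a degenerate regime on which no next event can occur and would not be consistent with the point process intensity interpretation (and in our model $\lambda^*$ is strictly positive by construction through the RNN parameterization). Running the same substitution in reverse yields $\int_{t_j}^{\infty}f^*(t)\,dt=1-e^{-\int_{t_j}^{\infty}\lambda^*(s)\,ds}$, and requiring this to equal $1$ forces $\int_{t_j}^{\infty}\lambda^*(s)\,ds=\infty$.

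The main obstacle is really just making sure the change of variables is rigorously justified; under the implicit assumption that $\lambda^*$ is (almost everywhere) continuous and strictly positive, $u(t)$ is a $C^1$ bijection from $[t_j,\infty)$ onto $[0,\infty)$ precisely when the divergence condition holds, so the substitution is clean and both directions reduce to evaluating a single elementary integral. A secondary subtlety worth flagging is the gap between ``$\lambda^*\ge 0$'' (which is what the density condition strictly forces) and ``$\lambda^*>0$'' (as stated in the proposition); I would treat the strict version as a convention inherited from standard point process theory, rather than as something extracted purely from validity of $f^*$.
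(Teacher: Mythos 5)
Your proof is correct and follows essentially the same route as the paper's: both directions reduce to checking that $F^*(\infty)=1-e^{-\int_{t_j}^{\infty}\lambda^*(u)\,du}$ equals $1$ exactly when the integral diverges, together with positivity of $\lambda^*$ so that $f^*$ in Eq.~\ref{eq:density} is a genuine density; your change of variables is just an explicit computation of what the paper reads off directly from Eq.~\ref{eq:prob}. You are in fact slightly more careful than the paper in flagging that validity of the density only forces $\lambda^*\ge 0$, with strict positivity taken as a convention.
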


\begin{proof}

Necessity ($\Leftarrow$).  Given $\int_{t_j}^{\infty} \lambda^*(t) dt=\infty$ and Eq.\ \ref{eq:prob}, we have $F^*(\infty)=1$, from which it follows that $P(t_j\leq t\leq \infty)=1$. Since $\lambda^*(t)$ is positive, under this condition it defines a valid probability distribution, hence a well established temporal point process.  

Sufficiency ($\Rightarrow$). First, $\lambda^*(t)$ must be positive for it to define a valid probability density by Eq.~\ref{eq:density}. If $\int_{t_j}^{\infty} \lambda^*(t) dt \neq \infty$, which means the integral is a positive less than $\infty$, then it is easy to notice that $F^*(\infty)<1$. This would be an invalid cumulative distribution function since $P(t_j\leq t\leq \infty)<1$.
\end{proof}

\noindent We formally define two forms of conditional intensity as follows.

\textbf{Explicit time dependence $\lambda_A^*$:} The first form is inspired by \cite{du2016recurrent}, which models the conditional intensity based on the hidden states $h_j$ and the time $t$.
\begin{equation}
\lambda_A^*(t)=e^{vh_j+w(t-t_j)+b}, \ \ s.t.\ \ w>0
\label{eq:exp_intensity}
\end{equation}
Note that we make an important correction to \cite{du2016recurrent}. The conditional intensity without the positive constraint in Eq.\ \ref{eq:exp_intensity} does not conform to the necessary condition above. By imposing a constraint $w>0$, we can prove that the revised intensity in Eq.\ \ref{eq:exp_intensity} satisfies the condition in the above proposition.


\textbf{Implicit time dependence $\lambda_B^*$:}
Note that the design of $\lambda_A^*(t)$, to some extent, assumes how it is a function of time $t$. As $t$ is part of the input, we believe it is possible to acquire the time information from the hidden states $h_j$ without any specification about $t$. We use an exponential activation to ensure the positivity of the resultant conditional intensity. Formally, we have:
\begin{equation}
\lambda_B^*(t)=e^{wh_j+b}
\label{eq:simple_intensity}
\end{equation}

The proof for the validity of $\lambda_A^*(t)$ and $\lambda_B^*(t)$ is provided in Appendix \ref{appdx:valid_intensity}. The analytic form for the likelihood $f^*(t)$ is obtained by substituting Eq. \ref{eq:exp_intensity} or Eq. \ref{eq:simple_intensity} into Eq. \ref{eq:density}:
\begin{equation}
\label{eq:exp_pdf}
\begin{aligned}
    f^*_A(t) =& exp\Big\{ vh_j+w(t-t_j)+b \\
        &- \frac{1}{w} e^{vh_j+b} \big[e^{w(t-t_j)}-1 \big] \Big\}, \\ 
        &s.t.\ \ w>0;
\end{aligned}
\end{equation}
\begin{equation}
\label{eq:simple_pdf}
\begin{aligned}
    f^*_B(t) =& exp\Big\{ wh_j+b - e^{wh_j+b}(t-t_j) \Big\}.
\end{aligned}
\end{equation}


\subsection{Joint Likelihood}
Now we show our model can be readily plugged into a joint estimation framework by formulating a joint likelihood for the timing, spatial location and category of activities. However, instead of directly modeling the next activity location, we use an incremental approach that models the space shift from the current position. Let $L$ be the joint likelihood for a sequence of activities; $t$, $a$ and $s$ denote the timestamp, action category, and space shift respectively. To derive the joint likelihood, we make the following assumption.


For mathematical convenience, we assume the timing, action category, space shift of event $j$ are conditionally independent given the history $H_j=\{(t_k, a_k, s_k)|k<j\}$ up to event ($j-1$). That is, $f(t_j, a_j, s_j|H_j)=f(t_j|H_j) f(a_i|H_j) f(s_j|H_j)$, or $f^*(t_j, a_j, s_j)=f^*(t_j)f^*(a_j)f^*(s_j)$ if we use the ``*" notation. Therefore, we have the joint likelihood $L_{\theta}$ parameterized by $\theta$:
\begin{equation}
\label{eq:likelihood}
\begin{aligned}
    L_{\theta}=& f_{\theta}(t_1, a_1, s_1, t_2, a_2, s_2, \cdots, t_n, a_n, s_n) \\
    =& f_{\theta}(t_1, a_1, s_1) f_{\theta}(t_2, a_2, s_2|t_1, a_1, s_1) \\ & f_{\theta}(t_3, a_3, s_3|t_1, a_1, s_1, t_2, a_2, s_2) \cdots \\ & f_{\theta}(t_n, a_n, s_n|t_1, a_1, s_1, t_2, a_2, s_2, \cdots, \\
    & t_{n-1}, a_{n-1}, s_{n-1}) \\
    =& f_{\theta}^*(t_1, a_1, s_1) f_{\theta}^*(t_2, a_2, s_2) \cdots f_{\theta}^*(t_n, a_n, s_n) \\
    =& f_{\theta}^*(t_1) f_{\theta}^*(a_1) f_{\theta}^*(s_1) f_{\theta}^*(t_2) f_{\theta}^*(a_2) f_{\theta}^*(s_2) \\
    & \cdots f_{\theta}^*(t_n) f_{\theta}^*(a_n) f_{\theta}^*(s_n)
\end{aligned}
\end{equation}
 We drop the subscript ``$\theta$" whenever possible for clean notations. Since we have already obtained the form of $f^*(t)$ in Eq. \ref{eq:exp_pdf} and Eq. \ref{eq:simple_pdf}, in the next section we derive the form of $f^*(a)$ and $f^*(s)$.

\textbf{Estimating the Action Category:}
The action category likelihood $f^*(a)=f(a|H)$ represents the distribution over the type of action. Since the history $H$ is encoded by the RNN hidden states $h$, we have $f^*(a)=f(a|h)$. Given the hidden states $h$, our model outputs a discrete distribution over $K$ action classes:
\begin{equation}
    \hat{P}_a=softmax(w'h+b')=[\hat{p}_{a,1}, \hat{p}_{a,2}, \cdots, \hat{p}_{a,K} ]^T.
\end{equation}
We then model this likelihood with a Gibbs distribution:
\begin{equation} \label{eq:action_likelihood}
    f^*(a)=e^{-D_{KL}(\hat{P_a}||P_a)}
\end{equation}
where the energy function $D_{KL}(\hat{P_a}||P_a)$ is the Kullback-Leibler divergence between the predicted distribution $\hat{P_a}$ and the ground-truth distribution $P_a$ (encoded as a one-hot vector).

\textbf{Estimating the Space Shift:}
The space shift likelihood gives the spatial distribution of the next move. Similar to $f^*(a)$, we have $f^*(s)=f(s|h)$. We model the likelihood using a bivariate Gaussian distribution:
\begin{equation}
\label{eq:spatial_likelihood}
    f^*(s)=\frac{1}{2\pi \sqrt{|\Sigma|}} e^{-\frac{1}{2} (s-\mu)^T \Sigma^{-1} (s-\mu)}
\end{equation}
where $\mu=(\mu_x, \mu_y)$ is the mean and $\Sigma$ is a 2x2 covariance matrix. We find that learning all the parameters in $\Sigma$ is unstable, so we assume the shifts along the $x$ and $y$ directions are independent, hence $\Sigma=\left( \begin{smallmatrix} \sigma_x^2&0\\ 0&\sigma_y^2 \end{smallmatrix} \right)$. We set $\Sigma$ to be constant and given the hidden states $h$; we use
\begin{equation}
    \mu=w''h+b''
\end{equation}
to parameterize Eq.\ \ref{eq:spatial_likelihood}, where $w''$ and $b''$ are learnable parameters.

\subsection{Training}
The model parameters can be learned in a supervised learning framework, by maximizing the likelihood of event sequences.  In order to formulate the data (log-)likelihood, we
substitute \ref{eq:exp_pdf}, \ref{eq:simple_pdf}, \ref{eq:action_likelihood} and \ref{eq:spatial_likelihood} into Eq.\ \ref{eq:likelihood}.  Converting this to log-likelihood yields Eq.\ \ref{eq:exp_log_likelihood} and Eq.\ \ref{eq:simple_log_likelihood} for the intensities $\lambda_A^*(t)$ and $\lambda_B^*(t)$ in Eq.\ \ref{eq:exp_intensity} and Eq.\ \ref{eq:simple_intensity}, respectively.
\begin{equation}
\label{eq:exp_log_likelihood}
\begin{aligned}
    \log L_A=& \sum_{j} \Big\{ -\sum_{k} \hat{p}_{a_j,k} \log \frac{\hat{p}_{a_j,k}}{p_{a_j,k}} \\
    & -\frac{1}{2} \big[\frac{(s_{j,x}-\mu_{j,x})^2}{\sigma_x^2}+\frac{(s_{j,y}-\mu_{j,y})^2}{\sigma_y^2} \big] \\
    & +vh_{j} +b +w(t_{j+1}-t_j) \\
    & -\frac{1}{w} e^{vh_j+b} \big[e^{w(t_{j+1}-t_j)}-1 \big] \Big\} +C, \\
    & s.t.\ \ w> 0
\end{aligned}
\end{equation}

\begin{equation}
\label{eq:simple_log_likelihood}
\begin{aligned}
    \log L_B=& \sum_{j} \Big\{ -\sum_{k} \hat{p}_{a_j,k} \log \frac{\hat{p}_{a_j,k}}{p_{a_j,k}} \\
    & -\frac{1}{2} \big[\frac{(s_{j,x}-\mu_{j,x})^2}{\sigma_x^2} +\frac{(s_{j,y}-\mu_{j,y})^2}{\sigma_y^2} \big] \\
    & +wh_j +b \\
    & - e^{wh_j+b}(t_{j+1}-t_j) \Big\} +C
\end{aligned}
\end{equation}
Here $C$ absorbs all constants in the derivation above and can be dropped during optimization. The joint likelihood for all sample sequences is obtained by summing the log-likelihood for each sequence. Because the log-likelihood is fully differentiable, we can apply back-propagation algorithms for maximization.

\subsection{Inference}
To infer the timing of the next activity, we follow the same inference procedure as in the standard point process literature: given all ground-truth history up to activity $j$, we predict when the next activity $(j+1)$ will happen. Then we proceed to predict the timing of activity $(j+2)$ given all ground-truth history up to activity $(j+1)$. Therefore, the errors will not accumulate exponentially. This is a reasonable approach in many practical scenarios (knowing what has happened up to now, predict the next event). While we have a full model of the distribution, to obtain a point estimate, we take the expected time $\hat{t}_{j+1}=E(t_{j + 1})$ as our prediction. Eq.\ \ref{eq:exp_pred_time} is the result obtained using the conditional intensity $\lambda_A^*(t)$ in Eq.\ \ref{eq:exp_intensity}, where $\Gamma(0, x)$ is an incomplete gamma function whose value can be evaluated using numerical integration algorithms. Eq.\ \ref{eq:simple_pred_time} is acquired using the conditional intensity $\lambda_B^*(t)$ in Eq.\ \ref{eq:simple_intensity}. The derivation makes use of Eq.\ \ref{eq:density}, and we include the full details in the supplementary material.

\begin{equation}
\label{eq:exp_pred_time}
\begin{aligned}
    E_A(t_{j+1}) =& \int_{t_j}^{\infty} t f_A^*(t) dt \\
     =& \frac{\Gamma(0, \eta) \cdot e^{\eta}}{w}+t_j ,\\
      \text{where}\ \eta=& \frac{1}{w} e^{vh_j+b}
\end{aligned}
\end{equation}

\begin{equation}
\label{eq:simple_pred_time}
\begin{aligned}
    E_B(t_{j+1})=& \int_{t_j}^{\infty} t f_B^*(t) dt \\
     =& e^{-(wh_j+b)}+t_j
\end{aligned}
\end{equation}

To predict the category of the next activity, we take the most confident class in the output distribution $\hat{P_a}$ as the prediction:
\begin{equation}
    \hat{a}_{j+1}=\arg \max_{k} \hat{p}_{a_{j+1},k}.
\end{equation}

To estimate the spatial location of the next activity, we take the expected space shift added to the current position $(x_j, y_j)$ as the result:
\begin{eqnarray}
\label{eq:pred_location}
\begin{aligned}
    \hat{x}_{j+1}=& x_j+E(s_{j+1,x}) \\
     =& x_j+\int_{-\infty}^{\infty} s_x \big[\int_{-\infty}^{\infty} f(s_x, s_y|t_{j+1}) ds_y \big] ds_x \\
    =& x_j+\mu_{j+1,x},\\
    \hat{y}_{j+1} =& y_j+E(s_{j+1,y}) \\
    =& y_j+\int_{-\infty}^{\infty} s_y \big[\int_{-\infty}^{\infty} f(s_x, s_y|t_{j+1}) ds_x \big] ds_y \\
    =& y_j+\mu_{j+1,y}.
\end{aligned}
\end{eqnarray}


\section{Experiments}
We evaluate the model on two challenging datasets collected from real world sports games. These datasets include activities in basketball and ice hockey with extremely fast movement. 

All of our baselines consist of two components: a Markov chain and a conventional point process. The Markov chain models action category and space shift distribution; the point process models action timestamps. In our experiments, we compare TPM's performance in time estimation with three other typical temporal point processes: Poisson process, Hawkes process and self-correcting process (Sec.\ \ref{sec:relatework}). We compare TPM's performance in space and category prediction with $k$-order Markov chains ($k=1, 3, 5, 7, 9$). Also note that TPM has two variants, TPM$_A$ and TPM$_B$, using the two conditional intensity functions $\lambda_A^*(t)$ and $\lambda_B^*(t)$ in Eq.\ \ref{eq:exp_intensity} and Eq.\ \ref{eq:simple_intensity}, respectively.

\subsection{Datasets}
\textbf{STATS SportVU NBA dataset.} This dataset contains the trajectories of 10 players and the ball in court coordinates. During each basketball game possession, there are annotations about when and where a pre-defined activity is performed, such as pass, rebound, shot, etc. 

The frame data are obtained by concatenating the $(x, y)$ court coordinates of the offensive players, defensive players and the ball. The order of concatenation within each team is determined by how far a player is away from the ball. The closest is the first entry while the farthest is appended as the last. The frame data are fed into the hierarchical RNN with a single-layer perceptron as the feature extractor of each frame. The maximum number of frames is 150 for each sequence. A basketball possession is at most 24 seconds, so this results in an effective frame rate of 6.2fps. During training, we set both $\sigma_x$ and $\sigma_y$ to 2ft.

\textbf{SPORTLOGiQ NHL dataset.} This dataset includes the raw broadcast videos, player bounding boxes and trajectories with similar annotations to the NBA dataset. However, unlike the NBA dataset, the number of players in each frame may change due to the nature of broadcast videos. To solve this problem, we set a fixed number $N_p$ of players to use. If there are fewer than $N_p$ players, we zero out the extra entries. If there are more than $N_p$ players, we select the $N_p$ players that are most clustered. We essentially assume the players cluster around where the actions are. We use closeness centrality to implement this intuition. We build a complete graph over the players in a frame, each player being a node in the graph. Then we compute the closeness centrality for each node using Euclidean distance and choose the top $N_p$ highest closeness scores. 

Given the pixels inside the bounding box and the coordinates of a single player, we feed them into a VGG-16 network \cite{simonyan2014very} and a single-layer perceptron respectively. The outputs are then summed. This is repeated $N_p$ times (i.e.\ for every selected player), and finally we do element-wise max-pooling over the $N_p$ feature vectors to obtain a holistic feature representation for the $N_p$ players. Fig. \ref{fig:shared_net} outlines this workflow.

In the experiments, we use $N_p=4$. For each sequence, we use at most 80 frames for training and 200 frames for evaluation. After down-sampling the videos, the frame rate is 7.5fps. Thus the longest sequence allowed is approximately 10.7s for training and 26.7s for evaluation. We again use $\sigma_x=\sigma_y=2$ft.

\begin{figure}[t]
    \centering
    \includegraphics[width=0.42\textwidth]{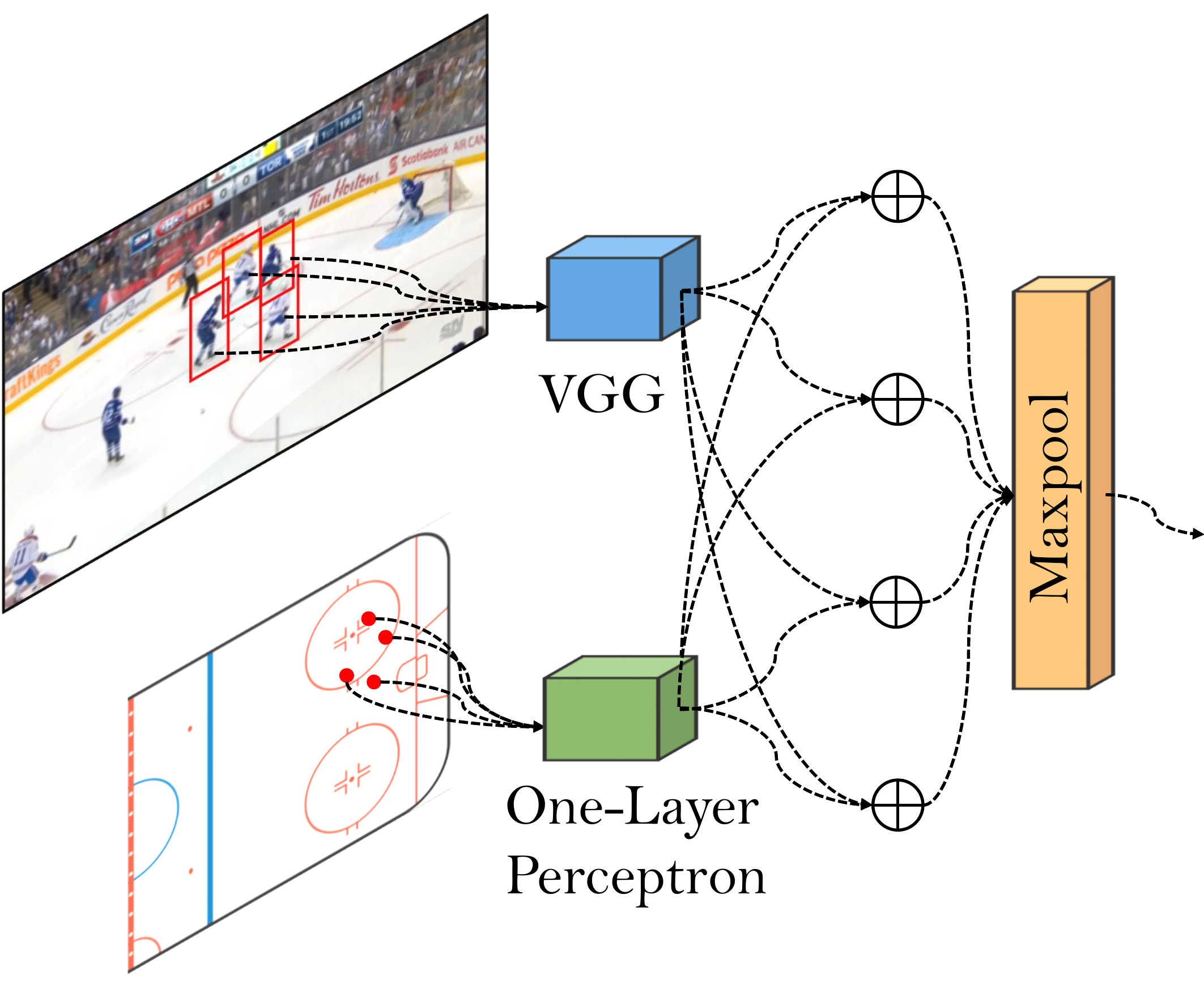}
    \caption{Frame-level feature extractor for the SPORTLOGiQ dataset.}
    \label{fig:shared_net}
\end{figure}

\subsection{Performance Measures}
We use mean absolute error (mAE) to evaluate the estimation of time and space, and mean average precision (mAP) to measure the performance of action category prediction. However, given the nature of sports games, there are significant variations among the time intervals between neighboring activities (intervals range from milliseconds to seconds). Reporting mAE alone ignores these variations. For example, an error of 100ms is considered less significant if the ground-truth time interval is 1s as opposed to merely 100ms. Therefore we advocate mean deviation rate (mDR) as a better measure. Deviation rate (DR) is calculated as below; mDR is DR averaged over all time steps. 
\begin{equation}
\label{eq:mdr}
    \textit{DR}=\frac{|\textit{Predicted Time}-\textit{Current Time}|}{\textit{Ground-Truth Time Interval}}
\end{equation}

\subsection{Baselines}
The baseline models predict the time of the next activity with conventional temporal point process, such as Poisson process, Hawkes process and self-correcting process. In order to predict the category and location of next activity, we utilize $k$-order Markov chains, where $k=1, 3, 5, 7, 9$. We do not use higher orders since most sample possessions do not have sequence length larger than 10. 

The inference stage of a $k$-order Markov chain works as follows. Given the most recent $k$ activities, we find the next activity with the highest transition probability. If the number of historical activities at current time step is less than $k$ or we are unable to find the exact $k$ historical activities in the transition matrix, we relax the dependency requirement by using the most recent $k-1$ activities. This is repeated until we find a valid transition to the next activity. The worst case is a degenerate Markov chain of 0-order, which is basically doing majority voting. Given the selected transition to next activity, we compute the mean space shift of all such transitions collected during training, which will be added to the current location, eventually making the prediction of the next activity location.

\subsection{Results}
The results in Tab.\ \ref{table:joint_on_time} show that the proposed TPMs outperform traditional statistical approaches. On the other hand, by comparing the two TPM variants, we find that TPM$_B$ performs better than TPM$_A$. Thus, the proposed conditional intensity $\lambda_B^*(t)$ can be more generic and effective than $\lambda_A^*(t)$. 

\begin{table}[t]
  \caption{Results of time prediction as part of joint estimation.}
  \label{table:joint_on_time}
  \centering
  \begin{tabular}{ccccc}
    \toprule
    & \multicolumn{2}{c}{NBA} & \multicolumn{2}{c}{NHL} \\
    \cmidrule{2-3}
    \cmidrule{4-5}
     & mAE (ms) & mDR (\%) & mAE (ms) & mDR (\%) \\
    \midrule
     TPM$_A$ & 288.1 & 54.6 & 527.9 & 174.5 \\
     TPM$_B$ & 282.1 & 52.0 & 530.7 & 172.0 \\
     Poisson & 365.5 & 547.0 & 645.4 & 297.6 \\
     Hawkes & 363.8 & 541.2  & 643.7 & 296.5 \\
     Self-Correcting & 382.4 & 522.4  & 643.0 & 291.5\\
    \bottomrule
  \end{tabular}
\end{table}

To see what the model has learned, we visualize the TPM$_B$ model predictions versus ground-truth annotations in Fig.\ \ref{fig:arrival_pattern}. We find that our model generally is able to approximate and keep track of the true arrival pattern in the input sequence (e.g., the upper row in each of the four subfigures in Fig.\ \ref{fig:arrival_pattern}). There are some large gaps between prediction and ground-truth when there comes a sudden high spike in the ground-truth. We believe this is because of the inherent randomness in sports games. In addition to the past series of activities, the action to be performed depends on many other factors such as tactics, which have not been explicitly observed and annotated during training and are challenging for the model to learn. 

The lower row of each of the four subfigures in Fig. \ref{fig:arrival_pattern} visualizes how the predicted time distribution changes as a basketball possession proceeds. The ability to capture the temporal distribution is a key advantage of the TPM.

\begin{figure*}[htp]
\centering
\tabcolsep 1pt
\begin{tabular}{cc}
\includegraphics[width=0.9\columnwidth]{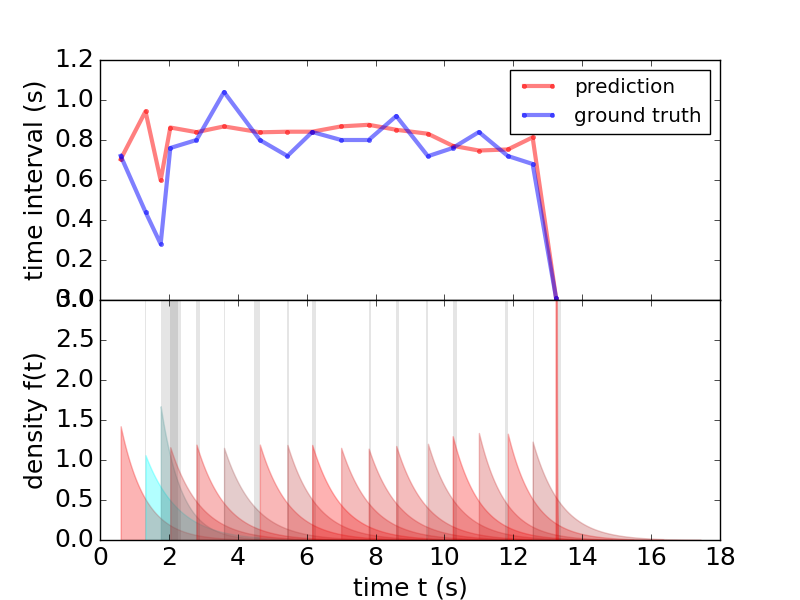} &
\includegraphics[width=0.9\columnwidth]{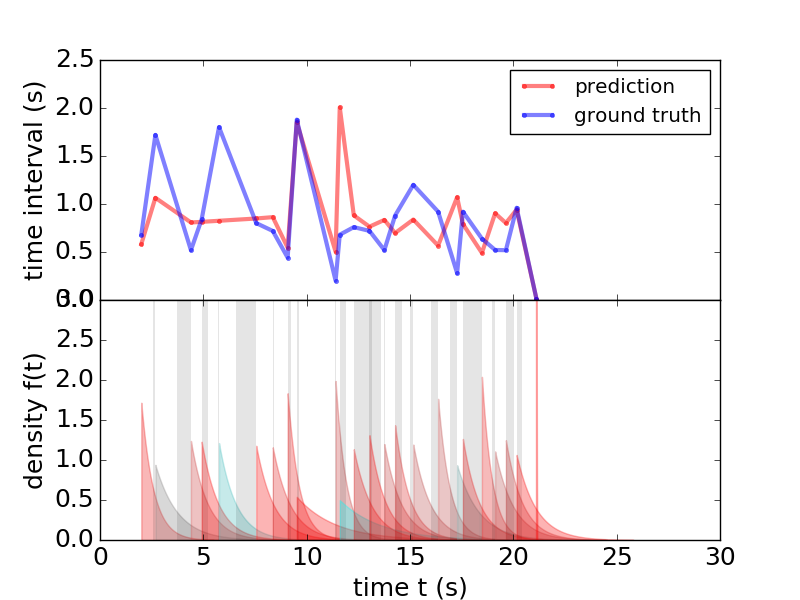} \\
\includegraphics[width=0.9\columnwidth]{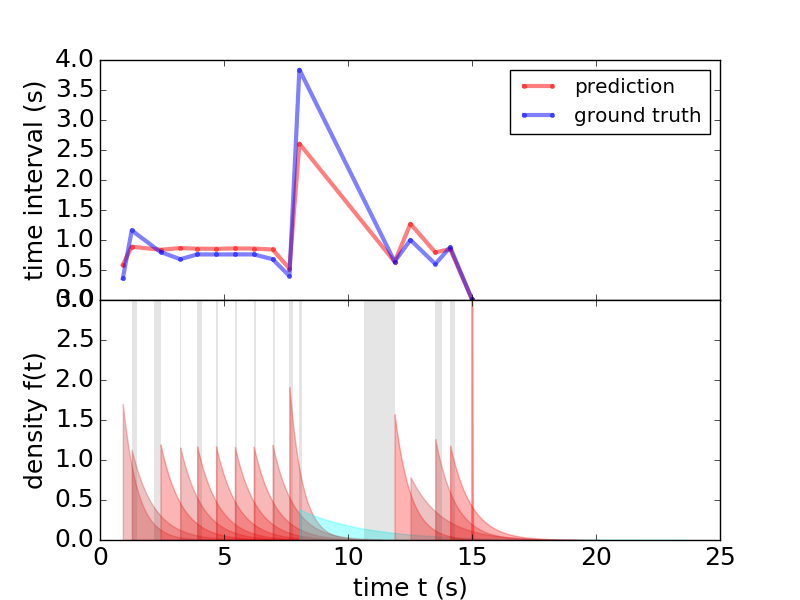} &
\includegraphics[width=0.9\columnwidth]{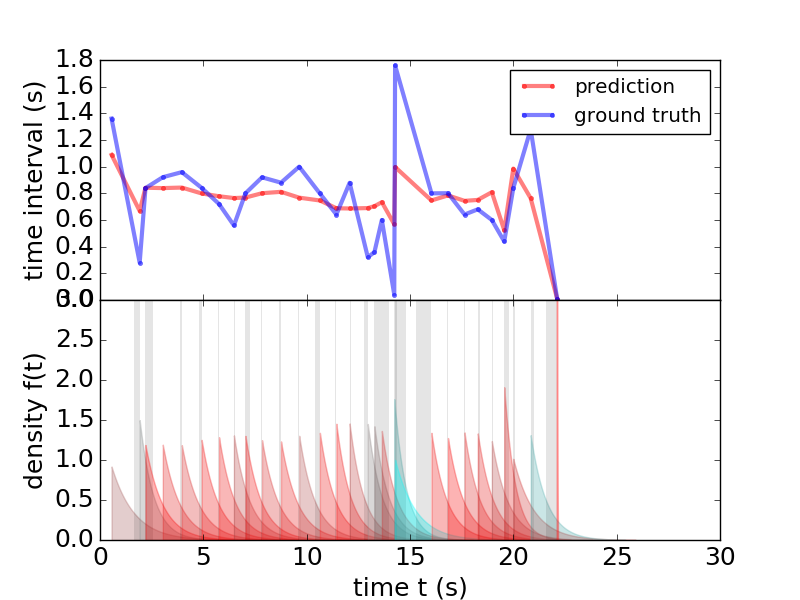} \\
\end{tabular}
\caption{Visualization of sample arrival patterns and predicted time distributions on the NBA dataset (TPM$_B$). The horizontal axis is the time line for a sequence of activity events within a basketball possession. The upper part of each subfigure plots the predicted and ground-truth time intervals between the current activity and the next activity. The lower part of each subfigure shows the predicted time distribution at each activity event (i.e. red or blue area). There is also a gray bar indicating the error between the predicted time and the ground-truth time on the next activity. The wider the gray bar, the more error and blueish the corresponding distribution; the thinner the gray bar, the less error and reddish the corresponding distribution. The near-vertical spiky distribution at the end of each subfigure shows how well TPM is predicting the sequence end.}
\label{fig:arrival_pattern}
\end{figure*}

\begin{figure}[htp]
  \centering
  \includegraphics[width=0.9\linewidth]{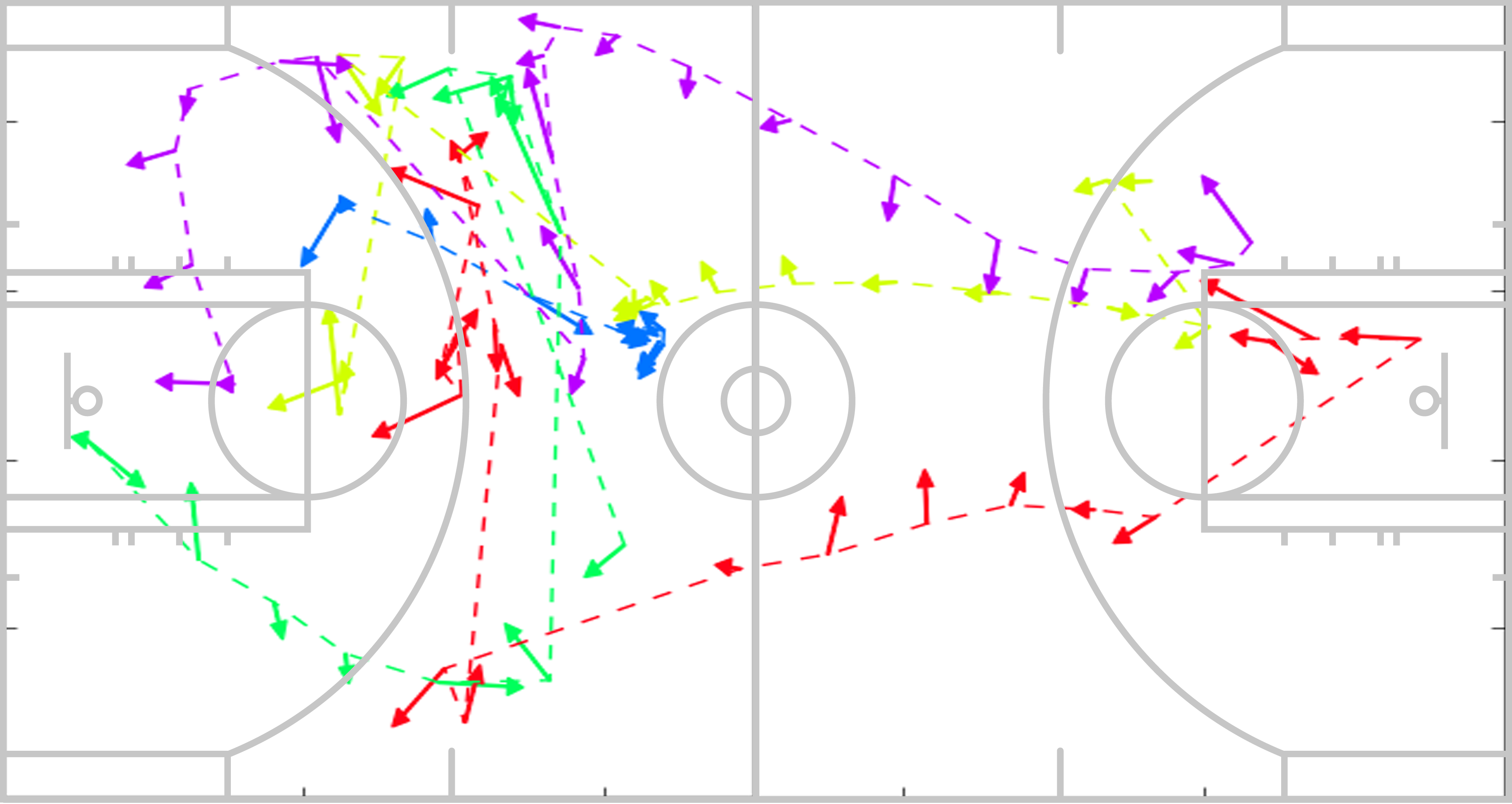}
  \caption{Qualitative results of space prediction on the NBA dataset. Multiple example possessions are shown, each in a different color. Ground-truth locations of the activity sequences are connected with dashed lines. Each arrow points from the ground-truth location of an activity to its location predicted by our model.}
  \label{fig:space}
\end{figure}

\begin{table*}[h]
    \caption{Results of category and space prediction as part of joint estimation. MC-$k$ refers to $k$-order Markov chain.}
    \label{table:joint_on_space_category}
    \centering
    \begin{tabular}{c|cc|ccccccc}
        \toprule
          \multicolumn{3}{c}{} & TPM$_A$ & TPM$_B$ & MC-1 & MC-3 & MC-5 & MC-7 & MC-9\\
         \midrule
          \multirow{8}{*}{NBA} & \multicolumn{2}{c|}{space mAE (ft)} & 3.43 & 3.28 & 6.91 & 6.86 & 6.73 & 6.69 & 6.69 \\
          \cmidrule{2-10}
          & \multirow{7}{*}{\rotatebox[origin=c]{90}{category AP (\%)}} & shoot & 57.9 & 58.0 & 10.1 & 32.9 & 35.7 & 37.0 & 37.4 \\
          & & dribble & 92.4 & 92.7 & 86.2 & 76.2 & 80.6 & 82.1 & 82.6 \\
          & & pass & 44.5 & 45.9 & 34.3 & 21.4 & 22.6 & 24.5 & 24.7 \\
          & & reception & 98.4 & 98.4 & 96.2 & 95.3 & 95.3 & 95.2 & 95.1 \\
          & & assist & 8.7 & 8.6 & 2.1 & 2.5 & 3.3 & 3.7 & 3.7 \\
          & & end & 99.9 & 99.9 & 99.9 & 99.9 & 99.9 & 99.9 & 99.9 \\
          & & mAP & 67.0 & 67.2 & 54.8 & 54.7 & 56.2 & 57.0 & 57.3 \\
          \midrule
          \multirow{12}{*}{NHL} & \multicolumn{2}{c|}{space mAE (ft)} & 56.95 & 57.01 & 65.96 & 66.60 & 66.85 & 66.88 & 67.24 \\
          \cmidrule{2-10}
          & \multirow{11}{*}{\rotatebox[origin=c]{90}{category AP (\%)}} & pass & 61.2 & 61.8 & 66.9 & 51.8 & 52.4 & 53.1 & 52.9 \\
          & & reception & 64.4 &64.3 & 78.8 & 50.8 & 51.8 & 52.3 & 52.1 \\
          & & carry & 21.3 & 21.2 & 30.8 & 20.0 & 18.7 & 19.2 & 18.8 \\
          & & shoot & 11.1 & 9.6 & 11.4 & 10.9 & 9.9 & 10.4 & 10.3 \\
          & & dumpin & 11.3 & 12.2 & 30.0 & 8.6 & 9.5 & 9.2 & 9.3 \\
          & & protection & 32.8 & 32.8 & 28.3 & 24.4 & 23.6 & 24.8 & 24.4 \\
          & & dumpout & 4.7 & 5.5 & 22.8 & 4.6 & 4.6 & 4.6 & 4.6 \\
          & & check & 11.0 & 11.8 & 19.5 & 7.2 & 7.3 & 8.0 & 8.7 \\
          & & block & 25.9 & 23.0 & 21.7 & 15.5 & 16.2 & 15.8 & 15.8 \\
          & & end & 80.6 & 79.5 & 47.0 & 32.9 & 26.6 & 25.0 & 25.0 \\
          & & mAP & 32.4 & 32.2 & 35.7 & 22.7 & 22.1 & 22.2 & 22.2 \\
         \bottomrule
    \end{tabular}
\end{table*}

In terms of space prediction, Tab. \ref{table:joint_on_space_category} shows quantitative results. We see that TPMs have consistently better performance than Markov chains on both datasets. A sample qualitative result is presented in Fig.\ \ref{fig:space}. Note that the court in NBA games is 94ft by 50ft and the rink in NHL games is 200ft by 85ft.

The space mAE (in Euclidean distance) on the NHL dataset is significantly greater than that on the NBA dataset. We believe this is because, in ice hockey games, players and the puck exhibit extremely quick motions. For example, the puck can be moved from one end of the rink to the other in less than a second, after which a puck reception could happen immediately, making the spatial location hard to predict. In contrast to hockey, our models are more accurate for basketball, where the relatively slower motions make space prediction more precise. Space prediction relies heavily on the speed of motion, but category prediction is not subject to such a constraint, so our models exhibit reasonable performance on inferring the type of the next activity. 

An interesting finding is that a $1^{st}$-order Markov chain has surprisingly good mAP on the NHL dataset when compared to Markov chains of other orders. After we look into the precision of each category (provided in the supplementary material), we find that it performs exceptionally better on activities such as carry, dumpout and dumpin, which are very rare in the training data as opposed to other types of activities. We did not observe similar behaviour on the NBA dataset, so we believe this results from the highly unbalanced ground-truth annotations in the NHL dataset. 

\begin{table}[t]
    \centering
    \caption{Comparison between TPM and a vanilla regression neural network on the task of predicting the time of next activity. Errors are measured in mDR. }
    \label{table:comp_regress}
    \begin{tabular}{ccc}
        \toprule
         & TPM$_B$ & Regression NN \\
        \midrule
         NBA & 51.6\% & 56.9\% \\
         NHL & 138.0\% & 188.2\% \\
        \bottomrule
    \end{tabular}
\end{table}

\section{Discussion}
\noindent \textbf{Regression v.s. distribution}. An intuitive way to predict the next activity time is training a regression neural network with mean squared error loss. However, we believe that learning a distribution captures more than regressing a scalar does. We validate this by doing a simple experiment. We train TPM$_B$ solely for time prediction. Everything else equal, we train a vanilla regression neural network to predict the time interval between current activity and next activity, which is then added to current timestamp to obtain the predicted time of next activity. Results are presented in Tab. \ref{table:comp_regress}. We see clearly how TPM does a better job in predicting the next activity occurrence. Additionally, since TPM is trained explicitly by maximizing the raw likelihood function, it readily enables us to inspect the temporal distribution of predictions as in Fig.\ \ref{fig:arrival_pattern}, whereas this feature is not available for a regression model.

\noindent \textbf{Framework and generality}. The proposed TPM is a general framework for prediction and modeling the arrival pattern of an activity sequence. It does not rely on a specific neural network structure. For example, in our experiment, we use a simple VGG-16 as the backbone network, but one can use other more advanced networks such as \cite{szegedy2015going, he2016deep, huang2017densely}. Networks \cite{msibrahiCVPR16deepactivity, tran2015learning, simonyan2014two, qiu2017learning} exclusively designed for action recognition can be used as well.
    
\noindent \textbf{Applicable scenarios}. TPM is a powerful model of the arrival pattern of sparsely distributed activities and can forecast the exact next activity time of occurrence. Here ``sparsely distributed" does not imply any concepts regarding weak supervision/annotation. TPM conforms to a fully supervised learning paradigm. Existing work such as \cite{huang2016connectionist} uses sparsely annotated data as well, but it addresses a totally different task than TPM. Furthermore, TPM specializes in dealing with sequences where activity events can be approximated as mass points in time. Activities with long temporal span do not fit into the TPM framework. Therefore, TPM is positioned in contrast to existing benchmarks such as Breakfast \cite{Kuehne12} and MPII-Cooking \cite{rohrbach15ijcv}, but useful for the sports analytics, surveillance, and autonomous vehicle scenarios outlined above.


\section{Conclusion}

We have presented a novel take on the problem of activity forecasting.  Predicting when and where discrete, important activity events will occur is the task we explore.  In contrast with previous activity forecasting methods, this emphasizes semantically meaningful action categories and is explicit about when and where they will next take place.  We construct a novel hierarchical RNN based temporal point process model for this task.  Empirical results on challenging sports action datasets demonstrate the efficacy of the proposed methods.


%

\appendices
\section{Probability density and cumulative distribution of temporal point processes} \label{appdx:pdf_cdf}
This seciton presents an intuitive derivation of Eq.\ref{eq:density} and Eq.\ref{eq:prob}.

The cumulative distribution $F^*(t)$ is defined as the probability that there is (at least) an event to happen at time $t$ since the last event time $t_j$. The ``*'' is a reminder that a quantity depends on the past. Let $f^*(t)$ denote the probability density function and $N(t)$ the number of events till time $t$. Then we have
\begin{equation}
    P(N(t)-N(t_j) \geq 1)=F^*(t).
\end{equation}

This is equivalent to
\begin{equation} \label{eq:p_num_events}
\begin{aligned}
    P(N(t)-N(t_j)=0)=& 1-F^*(t)
\end{aligned}
\end{equation}

Because the temporal point process models we are dealing with belong to the general class of non-homogeneous Poisson processes whose conditional intensity $\lambda^*(t)$ is a function of time $t$, by definition the number of events in $(t_j, t]$ conforms to Poisson distribution parameterized by $\Lambda$:
\begin{equation} \label{eq:p_poisson}
    P(N(t)-N(t_j)=k)=\frac{\Lambda^k}{k!} e^{-\Lambda},
\end{equation}
where $\Lambda$ is expected number of events per interval.

Because the conditional intensity $\lambda^*(t)$ is the expected rate of event arrivals, we have $\Lambda=\int_{t_j}^{t} \lambda^*(u) du$. Let $k$ in Eq. \ref{eq:p_poisson} be zero, then Eq. \ref{eq:p_poisson} is equal to Eq. \ref{eq:p_num_events}. This yields
\begin{equation}
    F^*(t)=1-e^{-\int_{t_j}^{t} \lambda^*(u) du},
\end{equation}
and that 
\begin{equation}
    f^*(t)=\frac{dF^*(t)}{dt}=\lambda^*(t) \cdot e^{-\int_{t_j}^{t} \lambda^*(u) du}.
\end{equation}

\section{The validity of conditional intensities} \label{appdx:valid_intensity}
This section provides the proof that the two conditional intensities (Eq.\ref{eq:exp_intensity} and Eq.\ref{eq:simple_intensity}) used in our experiments characterize valid temporal point processes. 

\begin{proof}
$\lambda^*(t)$ takes the form of Eq.\ref{eq:exp_intensity} if $w>0$ while it takes the form of Eq.\ref{eq:simple_intensity} if $w=0$. Let us denote
\begin{equation}
    \Lambda^*(t)=\int_{t_j}^{t} \lambda^*(u) du=
    \begin{cases} 
        \frac{1}{w} e^{vh_j+b} \big[e^{w(t-t_j)}-1 \big], & w>0 \\
        e^{vh_j+b}(t-t_j), & w=0 \end{cases}
\end{equation}

When $w\geq0$, the quantity $\Lambda^*(t)$ is monotonically increasing in terms of $t$. As $t$ approaches infinity, $\Lambda^*(t)$ approaches infinity as well. Substituting $\Lambda^*(t)$ into Eq. \ref{eq:prob}, we have $F^*(\infty)=1-e^{-\Lambda^*(\infty)}=1$, so $\lambda^*(t)$ is a valid conditional intensity when $w \geq 0$. 

However, when $w<0$, we have $\Lambda^*(\infty)= -\frac{1}{w} \cdot e^{vh_j+b} $, hence $F^*(\infty)=1-e^{-\Lambda^*(\infty)}<1$. This definitely results in an invalid probability distribution. Therefore, $\lambda^*(t)=e^{vh_j+w(t-t_j)+b}$, or equivalently Eq.\ref{eq:exp_intensity} and Eq.\ref{eq:simple_intensity}, is valid if $w \geq 0$.
\end{proof}

\section{Inference of time}
In this section, we derive the predicted time $\hat{t}_{j+1}$ for the two conditional intensities (Eq.\ref{eq:exp_intensity} and Eq.\ref{eq:simple_intensity}) we used.

\subsection{When $\lambda^*(t)$ takes the form in Eq. \ref{eq:exp_intensity}}
\begin{align*}
    \hat{t}_{j+1}=& E(t_{j+1})=\int_{t_j}^{\infty} t f^*(t) dt\\
    =& \int_{0}^{\infty} x e^{vh_j+wx+b + \frac{1}{w} e^{vh_j+b} (1-e^{wx})} dx + t_j \int_{t_j}^{\infty} f^*(t) dt\\
    &\text{\color{blue} (Obtained by letting $x=t-t_j$)}\\
    =& e^{vh_j+b} \int_{0}^{\infty} x e^{wx+ \frac{1}{w} e^{vh_j+b} (1-e^{wx})} dx + t_j \\
    &\text{\color{blue} ($\int_{t_j}^{\infty} f^*(t) dt$ is $F^*(\infty)$, so equal to 1)}\\
    =& \frac{e^{vh_j+b}}{w^2} \int_{1}^{\infty} \log y \cdot e^{\frac{1}{w} e^{vh_j+b} (1-y)} dy + t_j\\
    &\text{\color{blue} (Obtained by letting $y=e^{wx}$)}\\
    =& \frac{\eta e^\eta}{w} \int_{1}^{\infty} \log y \cdot e^{-\eta y} dy + t_j\\
    &\text{\color{blue} (where $\eta=\frac{e^{vh_j+b}}{w}$)}\\
    =& -\frac{e^\eta}{w} \int_{1}^{\infty} \log y \cdot de^{-\eta y} + t_j\\
    =& -\frac{e^\eta}{w} \big[ \log y \cdot e^{-\eta y} \Big|^{\infty}_{1} - \int_{1}^{\infty} \frac{e^{-\eta y}}{y} dy \big] + t_j\\
    &\text{\color{blue} (Integrate by parts)}\\
    =& \frac{\Gamma(0, \eta) \cdot e^{\eta}}{w}+t_j\\
    &\text{\color{blue} (where $\Gamma(0, \eta)=\int_{1}^{\infty} \frac{e^{-\eta y}}{y} dy$ is an incomplete} \\
    & \text{\color{blue} gamma function)}
\end{align*}

\subsection{When $\lambda^*(t)$ takes the form in Eq. \ref{eq:simple_intensity}}
\begin{align*}
    \hat{t}_{j+1}=& E(t_{j+1})=\int_{t_j}^{\infty} t f^*(t) dt\\
    =& \int_{t_j}^{\infty} t \lambda e^{-\lambda(t-t_j)} dt\\
    &\text{\color{blue} (Obtained by letting $\lambda=\lambda^*(t)$ since $\lambda^*(t)$ does not} \\
    &\text{\color{blue} actually rely on $t$)} \\
    =& \int_{0}^{\infty} x\lambda e^{-\lambda x} dx + t_j \int_{0}^{\infty} f^*(t)dt\\
    &\text{\color{blue} (Obtained by letting $x=t-t_j$)}\\
    =& \int_{0}^{\infty} x\lambda e^{-\lambda x} dx + t_j\\
    &\text{\color{blue} ($\int_{t_j}^{\infty} f^*(t) dt$ is $F^*(\infty)$, so equal to 1)}\\
    =& -\big[ e^{-\lambda}x \Big|_{0}^{\infty} -\int_{0}^{\infty} e^{-\lambda x}dx \big] +t_j\\
    &\text{\color{blue} (Integrate by parts)}\\
    =& \frac{1}{\lambda}+t_j\\
    =& e^{-(wh_j+b)} + t_j
\end{align*}



\ifCLASSOPTIONcaptionsoff
  \newpage
\fi



\bibliographystyle{IEEEtran}
\bibliography{egbib}
%




%




\end{document}